%%%%%%%%%%%%%%%%%%%%%%%%%%%%%%%%%%%%%%%%%%%%%%%%%%%%%%%%%%%%%%%%%%%%%%%%%%%%%%%%
%2345678901234567890123456789012345678901234567890123456789012345678901234567890
%        1         2         3         4         5         6         7         8

\documentclass[letterpaper, 10 pt, conference]{ieeeconf}  % Comment this line out
\newtheorem{theorem}{Theorem}

\newtheorem{definition}{Definition}

\usepackage{graphicx}
\usepackage{amsmath}% if you need a4paper
\usepackage{amssymb}
\usepackage{subcaption}
\usepackage[colorlinks=true,allcolors=blue]{hyperref}
\usepackage{cleveref}
\usepackage{stfloats}
\usepackage{mathrsfs}

\usepackage{pgfplots}
% \pgfplotsset{compat=1.18}
\pgfplotsset{compat=1.12}

\usepackage{pgfplotstable}
\usepackage{caption}
\usepackage{tikz}
\usepackage{adjustbox}
\pdfminorversion=4
\pdfobjcompresslevel=0
\pdfcompresslevel=9

%\documentclass[a4paper, 10pt, conference]{ieeeconf}      % Use this line for a4
                                                          % paper

\IEEEoverridecommandlockouts                              % This command is only
                                                          % needed if you want to
                                                          % use the \thanks command
\overrideIEEEmargins
% See the \addtolength command later in the file to balance the column lengths
% on the last page of the document

% The following packages can be found on http:\\www.ctan.org
%\usepackage{graphics} % for pdf, bitmapped graphics files
%\usepackage{epsfig} % for postscript graphics files
%\usepackage{mathptmx} % assumes new font selection scheme installed
%\usepackage{times} % assumes new font selection scheme installed
%\usepackage{amsmath} % assumes amsmath package installed
%\usepackage{amssymb}  % assumes amsmath package installed

\title{\LARGE \bf
Verifiable Safety Q-Filters via Hamilton-Jacobi Reachability and Multiplicative Q-Networks
}

\author{Jiaxing Li$^{*}$, Hanjiang Hu$^{*}$, Yujie Yang$^{*}$, Changliu Liu% <-this % stops a space
\thanks{$^{*}$ indicates equal contribution. }% <-this % stops a space
\thanks{Authors are with the Robotics Institute, Carnegie Mellon University, USA, Pittsburgh, PA, 15213, USA.  Work done while Y. Yang was visiting Carnegie Mellon University. {\tt\small \{jiaxingl, hanjianh, yujieyan, cliu6\}@andrew.cmu.edu}.}%
% \thanks{Y. Yang is with the Department of Electrical Engineering, Wright State University,
% Dayton, OH 45435, USA         {\tt\small pmisra@cs.wright.edu}}%
}

\begin{document}

\maketitle
\thispagestyle{empty}
\pagestyle{empty}

%%%%%%%%%%%%%%%%%%%%%%%%%%%%%%%%%%%%%%%%%%%%%%%%%%%%%%%%%%%%%%%%%%%%%%%%%%%%%%%%
\begin{abstract}
% \begin{comment}
% Recent learning-based safety filters outperformed the conventional ones like handcrafted control barrier functions (CBFs), by adapting to complex constraints, however, their safety is not formally guaranteed. In this work we propose a verifiable model free safety filter, which is based on Hamilton-Jacobi reachability analysis. Our main contribution contains 1)Expand verifiable self-consistency properties for Q value function. 2)Propose multiplicative Q network structure to resolve the safe set shrinkage problem. 
% % 3)Updating the current verification pipeline for multiplicative model structure. 
% 3)Propose a verification pipeline that can soundly verify the self-consistency properties. Our proposed method successfully synthesizes verified model free safety certificate on 4 commonly used safe control tasks.
% \end{comment}

Recent learning-based safety filters have outperformed conventional methods, such as hand-crafted Control Barrier Functions (CBFs), by effectively adapting to complex constraints. However, these learning-based approaches lack formal safety guarantees. In this work, we introduce a verifiable model-free safety filter based on Hamilton-Jacobi reachability analysis. Our primary contributions include: 1) extending verifiable self-consistency properties for Q value functions, 2) proposing a multiplicative Q-network structure to mitigate zero-sublevel-set shrinkage issues, and 3) developing a verification pipeline capable of soundly verifying these self-consistency properties. Our proposed approach successfully synthesizes formally verified, model-free safety certificates across four standard safe-control benchmarks.

\end{abstract}

%%%%%%%%%%%%%%%%%%%%%%%%%%%%%%%%%%%%%%%%%%%%%%%%%%%%%%%%%%%%%%%%%%%%%%%%%%%%%%%%
\section{Introduction}
Ensuring the safety of control systems is paramount, especially in safety-critical applications. 
Safety filters are mechanisms designed to ensure that systems avoid unsafe states by monitoring the system's safety risk and intervening when necessary by modifying the control inputs. 
% Traditionally, several model-based methods have been employed to design these safety filters. 
Several value-based methods have been developed to design the monitoring mechanism of these safety filters~\cite{dawson2023safe,wei2019safe}. 
%Control Barrier Function (CBF)\cite{Wieland2007} is a Lyapunov-like function that defines the safe set as its zero-level set and enforce constraints on control inputs to keep system trajectories within this set, and Agrawal et al.\cite{Agrawal2017} extend it to handle the discrete-time problems.
% Nevertheless, it remains challenging to properly construct these value functions given dynamic limits of the system, e.g., control saturation. 
Hamilton-Jacobi (HJ) reachability analysis \cite{Ganai2024} is a widely used approach to compute these value functions. Essentially, it obtains the value function by computing the backward reachable set of the constraint set given the system dynamics and control limits. 
% Model Predictive Control (MPC) optimizes control inputs over a finite horizon while respecting safety constraints, ensuring that future states remain within safe bounds. 
While effective, conventional methods like HJ reachability analysis face challenges related to scalability and generalizability, particularly in complex or uncertain environments.

To address these limitations, learning-based approaches have emerged, leveraging machine learning techniques to construct safety filters. Deep reinforcement learning (RL) is applied in a recent work \cite{Chow2020} to learn a safety critic from a binary reward signal that indicates the safety risk of the states. It can also be combined with conventional methods, e.g., Rubies-Royo \cite{Rubies-Royo2019} applied supervised learning to obtain the HJ reachability value function by approximately solving the safety Bellman equation. These learning-based methods can adapt to a wide range of scenarios and often outperform traditional methods in terms of flexibility and performance.
However, a significant drawback of learning-based approaches is the lack of theoretical guarantees. This is caused by the black box nature of neural networks,
% using neural network parameterized functions, and due to the neural network being considered as a black box, 
the learned models may not be strictly safe and possibly result in catastrophic failure. Furthermore, certain learning-based methods~\cite{Yang2024} learn a safety state value function that requires knowledge of the system dynamics for online filtering, which can introduce biases and computational overhead.

% In this paper, we bridge this gap by introducing formal guarantees to learning-based HJ reachability in the context of action-value functions. 
% Directly learning an action value function requires a more expressive model and also increases the difficulty of formally verifying the neural network, since the input dimension is increased and new verifiable conditions are required. We address these issues by proposing a novel multiplicative Q network that preserves more expressive power and also enables efficient verification. We also proposed two verifiable conditions for the safe set of the action value functions. To make the action value function verifiable, we perform post-hoc verification, then use the verification results to guide additional training. A common problem of the verification-guided training is the collapse of the safe set during the process \cite{Yang2024}. Our proposed multiplicative network also successfully addressed this problem by maintaining a reasonable size of the safe set. Our contributions can be summarized as follows.
In this paper, we bridge the gap between learning-based Hamilton-Jacobi (HJ) reachability with model-free filtering and formal safety guarantees by introducing a verifiable framework for the action-value functions.
Directly learning an action-value function poses two major challenges: (i) it demands a more expressive model due to the expanded input space, and (ii) it complicates formal verification and requires new verifiable conditions.
To address these challenges, we propose a novel multiplicative Q-network that simultaneously enhances expressiveness and facilitates efficient verification. Additionally, we introduce two verifiable conditions that prove the safety properties of the action-value function.
Our method follows a post-hoc verification approach: after training, we verify the learned Q-network and then guide further fine-tuning using identified counterexamples. A common issue in verification-guided training is the collapse of the zero-sublevel set over iterations \cite{Yang2024}. Our proposed multiplicative Q-network successfully mitigates this problem, preserving a reasonable zero-sublevel set size throughout the training process.

Our contributions can be summarized as follows.
\begin{itemize}
    \item We propose verifiable conditions for the learned action-value function.
    \item We propose an expressive multiplicative Q network architecture that mitigates the zero-sublevel-set shrinkage problem during verification-guided training.
    \item We develop a verification pipeline to soundly verify the proposed self-consistency conditions for the control-dependent safe invariant set represented by an action-value function.
    
    % \item We propose a novel formulation of Mixed Integer Linear Programming Problem (MILP) to efficiently verify the proposed multiplicative Q network.
    
\end{itemize}

\section{Related Work}

Hamilton-Jacobi (HJ) reachability analysis has served as a foundational framework for certifying the safety of dynamical systems. There always exists a control strategy that keeps the system trajectory inside the zero-sublevel set of the HJ value function. This definition mirrors the notion of a control invariant set from model predictive control (MPC)~\cite{Rawlings2017}, where admissible inputs exist to keep the state within a safe set. It also aligns with safety sets in control barrier functions (CBFs)~\cite{ames2019control,xiao2021high,choi2021robust}, and the forward invariant sets in the Safe Set Algorithm (SSA)~\cite{liu2014control,zhao2023safety}.

However, conventional HJB-based approaches face severe scalability challenges in high-dimensional systems ~\cite{mitchell2005time,mitchell2008flexible}. To address this, Fisac et al.~\cite{fisac2019bridging} proposed bridging HJ safety analysis with Reinforcement Learning (RL) by approximating the HJ value function via Q-learning. Specifically, they introduced a discounted Hamilton-Jacobi-Bellman (HJB) formulation that ensures contraction properties that facilitate stable learning using neural networks.
% \begin{equation}
%     \label{equ:pretrain_target}
%     \begin{aligned}
%         Q^\pi(x, u) &= (1-\gamma) h(x) \\
%         &\quad + \gamma \max\{ h(x), Q^\pi(f(x, u), \pi(f(x, u))) \},
%     \end{aligned}
% \end{equation}
% where \(\gamma \in (0,1)\) ensures contraction properties that facilitate stable learning using neural networks.

The learned Q-value function can then be utilized to construct a model-free safety filter~\cite{fisac2019bridging}. However, training a Q-value function using discounted loss does not guarantee that the resulting model certifiably satisfies true safety properties. Moreover, due to approximation errors inherent in neural networks, the resulting safe set may include unsafe states. As a result, while the trained Q-function may be practically useful, it is not inherently suitable for rigorous safety certification.

To address this issue, Yang et al.~\cite{Yang2024} proposed a formal verification framework that certifies the safety of neural network-based value functions. They formulated safety verification as a Mixed-Integer Linear Programming (MILP) problem,
% \begin{equation}
% \label{equ:verify_problem}
%     \text{find } x, \quad \text{s.t. } x \in X, y \notin Y, y = NN(x).
% \end{equation}
allowing direct checking of two key properties: constraint satisfaction and forward invariance. However, Yang et al.'s work focuses on verifying a state-value function \(V^\pi(x)\), which requires a dynamic model for online filtering.
% A **policy-dependent safe set** is defined as the set of states from which the system can be kept safe only by following the specific given policy \(\pi\). Formally, it satisfies:
% \begin{equation}
%     \label{equ:policy_conditioned_hjb}
%     V^\pi(x) = \max\{ h(x), V^\pi(f(x, \pi(x))) \},
% \end{equation}
% and the associated safe set \(S_\pi\) is:
% \begin{equation}
%     \forall x \in S_\pi, \quad h(x) \leq 0 \quad \text{and} \quad f(x, \pi(x)) \in S_\pi.
% \end{equation}
% While this provides certifiable guarantees under the known policy \(\pi\), it limits generalization: the value function \(V^\pi(x)\) may not guarantee safety if another policy is used, or if the model or policy is perturbed during deployment.
Sue et al.~\cite{Sue2024QlearningSafetyFilter} further highlighted the limitations of such model-based safety filters.
On the other hand, optimization-based verification methods, such as MIPVerify~\cite{tjeng2019evaluating}, have demonstrated strong capabilities in certifying neural network properties by encoding ReLU activations and constraints into Mixed-Integer Programs. These methods provide soundness and completeness guarantees. These motivate us to develop model-free safety filters based on action-value functions and leverage MIQCP-based verification in the certification.

% On the other hand, compared to MILP, reachability-based verification methods are increasingly being used in neural safe control ~\cite{hu2024verification,wang2024simultaneous,cheng2024robust,hu2024real}, providing sound and efficient computation with a slight sacrifice in completeness.
% These motivate us to develop model-free safety filters based on action-value functions and leverage reachability-based verification in the certification.

\section{Problem Formulation}
%\subsection{State Constraint and Safe Set}

Consider a discrete-time deterministic dynamical system with state \(x_t \in X \subset \mathbb{R}^m\) and control input \(u_t \in U \subset \mathbb{R}^n\), governed by the dynamics:
\begin{equation}
    \label{equ:discrete_dynamics}
    x_{t+1} = f(x_t, u_t),
\end{equation}
where \(U\) is assumed to be compact and linearly constrained, and the dynamics function \(f: X \times U \to X\) is assumed to be bounded and Lipschitz continuous.  
Given the discrete-time dynamics, a control policy \(\pi: X \to U\) maps states to inputs, and the closed-loop dynamics are denoted by \(f_\pi(x) = f(x, \pi(x))\).

There is a user-defined constraint function \(h: \mathbb{R}^m \to \mathbb{R}\), where a state is considered violating constraints if \(h(x) > 0\), and non-violating otherwise. It is assumed that $h$ is piecewise smooth and its zero-sublevel set is closed and connected.

\begin{definition}[Safe Invariant Set]
    \label{def:feasible_region}
    A set \(\mathcal{S} \subseteq \mathcal{X}\) is a safe invariant set if for all \(x \in \mathcal{S}\), \(h(x) \leq 0\) and there exists \(u \in \mathcal{U}\) such that:
    \begin{equation}
        f(x, u) \in \mathcal{S}.
    \end{equation}
\end{definition}

%Among all safe sets, the the maximum safe invariant set is defined as:

\begin{definition}[Maximum Safe Invariant Set]
    \label{def:max_feasible_region}
    The maximum safe invariant set \(\mathcal{S}_{\mathrm{max}}\) is:
    \begin{equation}
        \mathcal{S}_{\mathrm{max}} = \bigcup \left\{ \mathcal{S} \subseteq \mathcal{X} \mid \mathcal{S} \text{ is a safe invariant set} \right\}.
    \end{equation}
\end{definition}

The maximum safe invariant set can be characterized as the zero-sublevel set of the following state-value function under optimal policy \cite{tonkensrefining}:
\begin{equation}
    V(x) = \min_{\pi} \max_{t \geq 0} h(x_t),
\end{equation}
where \(x_{t+1} = f(x_{t}, \pi(x_{t}))\), and:
\begin{equation}
    \mathcal{S}_{\mathrm{max}} = \{ x \mid V(x)\leq 0 \}.
\end{equation}

The value function satisfies the discrete-time Hamilton-Jacobi-Bellman (HJB) variational inequality:
\begin{equation}
    \label{equ:hjb_inequality}
    V(x) = \max \left\{ h(x), \min_{u} V(f(x, u)) \right\}.
\end{equation}

After obtaining the HJ value function, the set of safe control inputs at a given state can be defined as follows.

\begin{definition}[Safe Control Set]
    \label{def:v_to_safe_u}
    Given HJ value function \(V(x)\) and state \(x\), the safe control set \(\mathcal{U}_s(x)\) is:
    \begin{equation}
        \mathcal{U}_s(x) := \left\{ u \in \mathcal{U} \mid V(f(x, u)) \leq 0 \right\}.
    \end{equation}
\end{definition}

While this framework provides a principled way to describe the invariantly safe states and safe controls, solving \Cref{equ:hjb_inequality} exactly is generally intractable for complex systems, and \cite{fisac2019bridging} proposed the following discounted HJB formulation:
\begin{equation}
    \label{equ:pretrain_target}
    \begin{aligned}
        Q^\pi(x, u) &= (1-\gamma) h(x) \\
        &\quad + \gamma \max\{ h(x), Q^\pi(f(x, u), \pi(f(x, u))) \},
    \end{aligned}
\end{equation}
where \(\gamma \in (0,1)\) is the discount factor, and the learned value function $Q(x, u)$ can be applied as a safety filter:
\begin{definition}[Model-Free Safety Filter]\label{def: model-free safety filter}
An action-value function $Q(x, u)$ is a model-free safety filter with its induced safe control set
\[
\mathcal{U}_{\text{safe}}(x_{t}) := \{ u_{t} \in \mathcal{U} \mid Q(x_{t}, u_{t}) \leq 0 \}.
\]  
if for any initial state $x_{0}\in \mathcal{X}$ and initial control $u_{0}\in \mathcal{U}$ such that $Q(x_{0}, u_{0})<0$, 
any trajectory \( \{x_t\}_{t=0}^{\infty} \) generated by applying actions \( u_t \in \mathcal{U}_{\text{safe}}(x_t) \) satisfies
\[
h(x_t) \leq 0 \quad \text{for all } t \geq 0.
\]
\end{definition}

However, the learned Q value function in \eqref{equ:pretrain_target} does not meet the requirement in the definition due to the discounting factor. This raises a fundamental question:  
\textbf{How can we learn a model-free safety filter with certifiable safety guarantees?}

\begin{figure*}[!ht]
    \centering
    \includegraphics[width=0.8\textwidth]{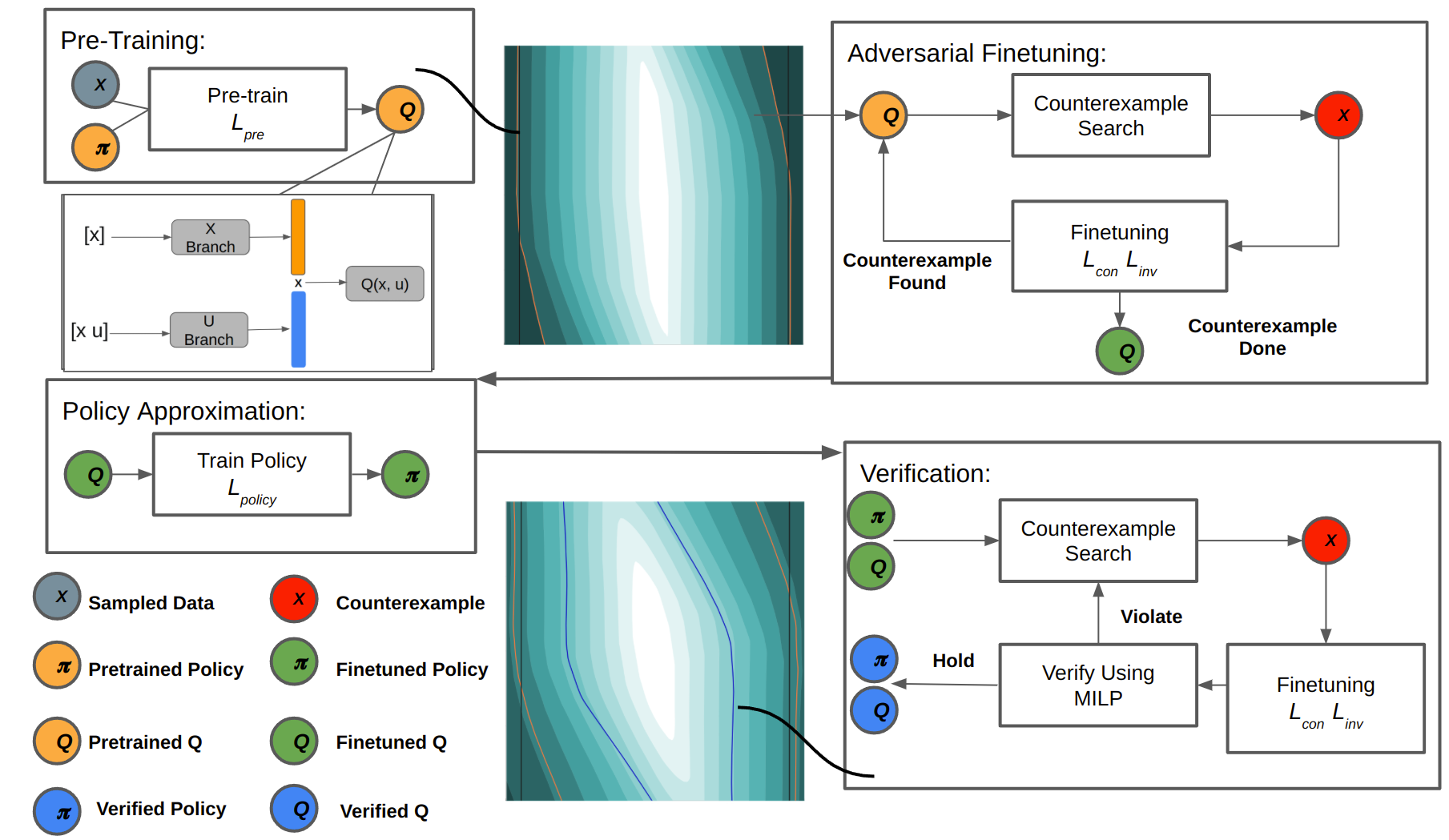}
    \caption{Illustration of the verification pipeline toward a certified model-free safety filter.  The pipeline consists of four stages: pre-training, adversarial fine-tuning, policy approximation, and verification. The Q-function is first pre-trained on a pretrained policy, then fine-tuned using counterexamples that violate the sufficient conditions under a loose interval-based bound. This loop repeats until no new counterexamples are found. A policy is then trained to minimize the fine-tuned Q-function as a conservative approximation. Finally, verification is performed using the approximated policy and Q-function to identify violations under a tighter conservative bound.}
    \label{fig:VerificationPipeline}
\end{figure*}

%%%%%%%%%%%%%%%%%%%%%%%%%%%%%%%%%%%%%%%%%%%%%%%%%%%%%%%%%%%%%%%%%%%%%%%%%%%%%%%%
\section{Methodology}
%%define the safe set of Q
This work presents a structured framework for synthesizing a verifiable model-free safety filter based on a learned HJ reachability Q-value function. The methodology contains four key steps: we first introduce sufficient conditions for the Q function to be a valid model-free safety filter. Then we talk about the parameterization of the Q function, followed by the method to certify the learned Q function with respect to the sufficient conditions, as well as the method to improve the error-prone Q function. 
\subsection{Sufficient Conditions}
%Conventional HJ-reachability analysis studies the properties of the state value function $V(x)$, which only takes the state $x$ as the input. However, to obtain a model-free safety filter, it is necessary for the safety certificate to also depend on the action $u$. To this end, we first derive the control-dependent safe set:

\begin{definition}[Control-dependent Safe Set]
\label{def:con_safe_region}
%     $Z\in X\times U$ where Z is a subset of the cartesian product of all states X and control U, where given $(x_s, u_s)\in Z,\; \exists \pi \;h(f(x_{t}, \pi(x_{t})))< 0, t\in N$, where $
% x_t = f^{t}_{\pi}(x_s, u_s) = \underbrace{f(f(\cdots f}_{t \text{ times}}(x_s, u_s) \cdots))$, thus, for Z to be the control-dependent safe region there exist safe policy the maintain the safety for the state and control pair within the control-dependent safe set.
Given dynamics model $f$ and constraint $h$, $\mathcal{I} \subseteq \mathcal{X} \times \mathcal{U}$ is a control-dependent safe set if
\begin{equation*}
\forall (x_0, u_0) \in \mathcal{I}, \ \exists \pi, \ \text{s.t.} \ h(x_t) < 0, \ \forall t \in \mathbb{N},
\end{equation*}
where $x_1=f(x_0,u_0)$ and $x_{t+1}=f_{\pi}(x_t),\forall t\ge1$.
\end{definition}

Implied by the above definition, for any state-control pair $(x_s, u_s)$ in the safe set $\mathcal{I}$, there exists a safe policy $\pi$ that maintains long-term safety of the system by keeping all subsequent states within the control-dependent safe set.

%Then, we prove two %necessary and 
%sufficient conditions for the safe set derived by HJ-reachability analysis for the Q value function $Q(x, u)$.

\begin{theorem}[Sufficient Conditions for $Q(x, u)$]
    \label{theorem:self_con_Q}
An action-value function \( Q(x, u) \) is a valid model-free safety filter according to \cref{def: model-free safety filter} if the following two conditions are satisfied:
\begin{enumerate}
    \item \textbf{Constraint Satisfaction:} For all \(Q(x, u) \leq 0\), we have
    \[
    h(x) \leq 0.
    \]
    \item \textbf{Forward Invariance:} For all \(Q(x, u) \leq 0\), there exists \( u' \in U \) such that
    \[
    Q(f(x, u), u') \leq 0.
    \]
\end{enumerate}
Then its zero-sublevel-set  \( \mathcal{I} :=\{(x,u)\mid Q(x,u)\leq 0\}\) is a control-dependent safe set according to \cref{def:con_safe_region}. 
\end{theorem}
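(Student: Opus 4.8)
The plan is to prove both assertions---that $Q$ is a valid model-free safety filter in the sense of \cref{def: model-free safety filter} and that $\mathcal{I}$ is a control-dependent safe set in the sense of \cref{def:con_safe_region}---by a single induction on the time index $t$. The invariant I would carry through the induction is that the trajectory never leaves $\mathcal{I}$, i.e.\ $Q(x_t,u_t)\le 0$ at every step; Constraint Satisfaction then converts this into the pointwise safety bound $h(x_t)\le 0$ for free.

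For the base case, take any $(x_0,u_0)$ with $Q(x_0,u_0)\le 0$ (the filter definition's hypothesis $Q(x_0,u_0)<0$ is a fortiori of this form), and Constraint Satisfaction immediately gives $h(x_0)\le 0$. The inductive step is where Forward Invariance carries the argument: assuming $Q(x_t,u_t)\le 0$ and writing $x_{t+1}=f(x_t,u_t)$, the condition supplies some $u'\in U$ with $Q(x_{t+1},u')\le 0$. This is precisely the assertion that $\mathcal{U}_{\text{safe}}(x_{t+1})\neq\emptyset$, so the trajectory can always be continued, and every admissible choice $u_{t+1}\in\mathcal{U}_{\text{safe}}(x_{t+1})$ keeps $Q(x_{t+1},u_{t+1})\le 0$. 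A second application of Constraint Satisfaction yields $h(x_{t+1})\le 0$, closing the induction and establishing $h(x_t)\le 0$ for all $t\ge 0$, which is the defining property of a model-free safety filter.

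To obtain the control-dependent-safe-set conclusion I would reuse the same chain, but now exhibit an explicit witness policy: for each $t\ge 1$ set $\pi(x_t):=u'$, the control furnished by Forward Invariance at $x_t$, and extend $\pi$ arbitrarily to the remaining states of $\mathcal{X}$. Then $x_{t+1}=f_\pi(x_t)$ and the invariant $Q(x_t,\pi(x_t))\le 0$ is maintained, so the same induction certifies safety of the $\pi$-generated trajectory, matching \cref{def:con_safe_region}. Since only existence of $\pi$ is required, defining it along the reachable states (with an arbitrary selection elsewhere) suffices.

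The step I expect to require the most care is the mismatch between the non-strict inequalities of the two sufficient conditions ($Q\le 0$, $h\le 0$) and the strict bound $h(x_t)<0$ demanded in \cref{def:con_safe_region}: the clean induction above only delivers $h(x_t)\le 0$, and a boundary pair with $Q(x_0,u_0)=0$ can give $h(x_0)=0$. Bridging this gap will require either strict versions of the conditions, an argument exploiting the strict initial hypothesis $Q(x_0,u_0)<0$ together with the Lipschitz/continuity structure of $f$ and $h$, or the understanding that the safe boundary $h=0$ is itself admissible; I would make the intended strengthening explicit and verify the induction under it.
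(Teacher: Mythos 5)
Your proof takes essentially the same route as the paper's: a recursive construction of a trajectory that stays in $\mathcal{I}$, using Forward Invariance to extend it one step at a time and Constraint Satisfaction to conclude $h(x_t)\le 0$ along the way. The strict-inequality mismatch you flag at the end ($h(x_t)<0$ required by \cref{def:con_safe_region} versus the $h(x_t)\le 0$ that the two conditions actually deliver) is a genuine discrepancy in the paper itself --- its own proof silently passes from $h(x_t)\le 0$ to the strict requirement of \cref{def:con_safe_region} --- so your more careful treatment of that point, together with your explicit construction of the witness policy as a selection on states rather than on time indices, is an improvement rather than a defect of your argument.
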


\begin{proof}
% (Necessity)  
% Assume that \( Z = \{(x, u) \mid Q(x, u) \leq 0\} \) is a control-dependent safe set according to \Cref{def:con_safe_region}. By definition, for any \((x, u) \in Z\), the state \(x\) must satisfy \( h(x) \leq 0 \), establishing constraint satisfaction. 
% Furthermore, according to \Cref{def:con_safe_region} there exist policy $\pi$ such that given any $(x_{0}, u_{0})\in Z$, following trajectory can be collected:$\xi := \{(x_t, \pi(x_t))\}_{t=0}^{\infty}$, thus given state action pair from trajectories $(x_{n}, u_{n})\in Z$, since the policy $\pi$ can maintain the rest of the trajectory $\xi := \{(x_t, \pi(x_t))\}_{t=n+1}^{\infty}$ within the safe region, so that $(x_{n+1}, \pi(x_{n+1}))\in Z$, thus $Z$ is forward invariant and $Q(x_{n+1}, \pi(x_{n+1}))<0$.
% (Sufficiency)  
Assume that \( Q(x, u) \) satisfies constraint satisfaction and forward invariance as stated above. Then, starting from any \((x_0, u_0) \in \mathcal{I}\), we can recursively construct a trajectory $\xi := \{(x_t, u_t)\}_{t=0}^{\infty}$, such that \( Q(x_t, u_t) \leq 0 \) and \( h(x_t) \leq 0 \) for all \( t \geq 0 \). Specifically, given \((x_t, u_t)\), the existence of \( u_{t+1} \) satisfying \( Q(f(x_t, u_t), u_{t+1}) \leq 0 \) ensures the trajectory can be continued indefinitely. Then $Q(x,u)$ is a valid model-free safety filter by \cref{def: model-free safety filter}. Consequently, the zero-sublevel-set \(\mathcal{I}\) remains invariant under the closed-loop dynamics. Thus, \( \mathcal{I} \) satisfies the definition of a control-dependent safe set in \Cref{def:con_safe_region}.
\end{proof}

Constraint satisfaction ensures that the safety value $Q(x, u)$ can only be smaller than 0 when current state $x$ is safe,
and forward invariance means in order for the current state action pair $(x, u)$ to be safe, there must exist an action $u'$ for the next state $x' = f(x, u)$ that can maintain the system inside the safe set.
Furthermore, $\exists u'\in U$, $Q(f(x, u), u') \leq 0$ can be ensured if its sufficient condition $\min_{u'}{Q(f(x, u), u')}\le0$ holds.

\subsection{Multiplicative Q network}
% Inspired by the parameterization of the Q value function in the following equation:
We parameterize the Q value function by observing its symbolic relationships with state and control:
\begin{align}
    \label{parameter_Q}
    Q(x, u) &= \max \{h(x), V(f(x, u))\}\\
    & = V(x + \dot{x}\cdot\Delta t) \quad \text{(}V(f(x, u)) > h(x) \text{)}\\ 
    % & \approx V(x) + \nabla V(x) \cdot \dot{x} \cdot\Delta t\\
    & = V(x) + L_{f}V(x)\cdot \Delta t + L_{g}V(x)\cdot \Delta t \cdot u
\end{align}
We focus on the formulation of $Q(x, u)$ when $V(f(x, u)) > h(x)$, since the complementary case $Q(x, u)=h(x)$ is trivial. The Lie derivative $L_{f}V(x)$ can be viewed as the inertia that directly affects the magnitude of the value function, and $L_{g}V(x)$ is the most unsafe control direction that provides a decomposition of the control effort. It is worth noting the existence of the bilinear term $L_{g}V(x)\cdot \Delta t \cdot u$. It is difficult for a fully connected neural network to capture the multiplicative interaction \cite{Li2016FactorizedBM}. Thus, instead of naively concatenating or adding the state-related branch (X branch) and control-related branch (U branch), we propose a novel multiplicative Q network where a bilinear quadratic multiplication is adopted. 
% The multiplicative Q network not only preserves the expressive power of bilinear quadratic cross terms but also enables efficient transformation into a Mixed-Integer Linear Programming (MILP) problem. This transformation allows for sound and scalable verification without increasing computational complexity, thereby significantly enhancing the network's representational capacity while maintaining verification efficiency. 
\Cref{fig:VerificationPipeline} shows the structure of the proposed Q network, where the state is passed through the X branch to obtain an independent feature representation, while the control is passed through the U branch with the state to obtain the control embedding, which is dependent on the state. Then, a vector inner product is applied to combine the two embeddings into the final Q value. 
% The parameterized Q function is denoted $Q_\theta$ in the following discussion. 
% And the interval propagation bound of the proposed Q network can also be efficiently calculated through the following interval arithmetic:
% \begin{definition}[Interval Arithmetic]
% \label{equ:IA}
% \text{Given the bounds at layer $i-1$, the bounds at layer $i$ satisfy}
% \begin{align}
%     \hat{l}_{i,j} &= [\mathbf{w}_{i,j}]_+ \boldsymbol{\ell}_{i-1} + [\mathbf{w}_{i,j}]_- \mathbf{u}_{i-1} + b_{i,j} \\
%     \hat{u}_{i,j} &= [\mathbf{w}_{i,j}]_+ \mathbf{u}_{i-1} + [\mathbf{w}_{i,j}]_- \boldsymbol{\ell}_{i-1} + b_{i,j} \\
%     l_{i,j} &= \sigma_{i,j}(\hat{\ell}_{i,j}) \\
%     u_{i,j} &= \sigma_{i,j}(\hat{u}_{i,j})
% \end{align}
% \end{definition}
% where $\hat{l}_{i,j}, \hat{u}_{i, j}$ stands for the pre-activate bounds of $j$th layer's $i$th neuron, and $[\mathbf{w}_{i,j}]_{+}, [\mathbf{w}_{i,j}]_{-}$are the positive and negative weights of $i$th layer.

\subsection{Certification of Q}
%In order to verify the proposed two sufficient conditions in \cref{theorem:self_con_Q}, we first formulate them into two independent feasibility problems by defining functions whose outputs are vectors that contain the two values required for verifying each condition, and then check whether the output lies within the region defined by the properties.
To verify the two sufficient conditions in \cref{theorem:self_con_Q}, we reformulate them as feasibility problems. Let $Y_\text{con}, Y_\text{inv}$ encode the constraint satisfaction and forward invariance conditions in \cref{theorem:self_con_Q}:
\begin{align}
    &\mathrm{Y}_{\mathrm{con}}(x, u) = \begin{bmatrix} Q(x, u) \\ h(x) \end{bmatrix} \\
    &\mathrm{Y}_{\mathrm{inv}}(x, u) = \begin{bmatrix} Q(x, u) \\ \min_{u'}Q(f(x, u), u') \end{bmatrix} 
\end{align}
\begin{definition}[Feasibility Conditions for Verification]\label{verify_Q_problem}
Define $\mathrm{Y'}(x, u) = \{[y_1, y_2] \in \mathbb{R}^2 \mid y_1 < 0, y_2 > 0 \}$. The Q function satisfies the sufficient conditions in \cref{theorem:self_con_Q} if there is no solution to the following problem,%, given dynamics and constraint model $f(x, u), h(x)$ finds counterexamples that violates self-consistency conditions:
\begin{align}
    &\text{ Find } (x_c, u_c) \in \mathcal{X} \times \mathcal{U} \text{, s.t.} \notag \\
    &\quad \mathrm{Y}_{\mathrm{con}}(x_c, u_c) \in \mathrm{Y}' \; \textbf{or} \; \mathrm{Y}_{\mathrm{inv}}(x_c, u_c) \in \mathrm{Y}'.
\end{align}
Otherwise, the solution $(x_c,u_c)$ is a counter example.
%where $Z$ is the cartesian product of $X$ and $U$, and $\min_{u'}Q(f(x, u), u') $, is the lower bound of the Q value at next time step by selecting the optimal control signal $u'$.
\end{definition}

In order to verify the properties soundly, we propose to use a parameterized policy network $\pi_{\phi}$ to approximate the value of $\min_{u'}Q(x, u')$ used in $Y_{\text{inv}}$.
% by learning the optimal control signal $u' = \pi_{\phi}(x)$, thus, the lower bound of the next time step can be approximated by $Q(x', \pi_{\phi}(x'))$, and since $Q(x', \pi_{\phi}(x')) \geq \min_{u'}Q(x', u')$ by proving $Q(x', \pi_{\phi}(x')) \leq 0$, $\min_{u'}Q(x', u') \leq 0$ can be soundly certified. 
\begin{definition}[Lower Bound Approximation]
    \label{theorem:certified_Q_bound}
The forward invariance condition in \cref{theorem:self_con_Q} can be soundly certified by approximating the lower bound of \(Q\) with a policy network \(\pi_{\phi}: \mathcal{X} \to \mathcal{U}\), trained to minimize:
\[
\mathcal{L}_{policy} = Q(x, \pi_{\phi}(x)).
\]
Soundness is ensured because if \(Q(x', \pi_{\phi}(x')) \leq 0\), then \(\min_{u'} Q(x', u') \leq 0\).
\end{definition}

The verification of a fully connected neural network can be formulated as a Mixed-Integer Quadratically Constrained Programming (MIQCP) problem as shown in \eqref{equ:MIQCP}, where $NN(x) = \sigma(W_n (\sigma(W_{n-1} (\dots \sigma(W_1 x + b_1) \dots ) + b_{n-1}) ) + b_n)$ is piecewise linear. To verify the proposed multiplicative Q network, we added a quadratic constraint that is $y = (Z_{1}(x)\cdot Z_{2}(x, u))$, where $Z_{1}(x) = \sigma(W_n (\sigma(W_{n-1} (\dots \sigma(W_1 x + b_1) \dots ) + b_{n-1}) ) + b_n)$  $Z_{2}(x, u) = \sigma(W_n (\sigma(W_{n-1} (\dots \sigma(W_1 (x, u) + b_1) \dots ) + b_{n-1}) ) + b_n)$. By adding such constraint, the verification is formulated into the following MIQCP problem:
\begin{equation}
    \label{equ:MIQCP}
    \text{find } x,u \quad \text{s.t. } (x,u) \in \mathcal{I}, y \in \mathrm{Y'}, y = (Z_{1}(x)\cdot Z_{2}(x, u)),
\end{equation}
We further illustrate the proposed MIQCP problem with an example focusing on the constraint satisfaction condition for a ReLU activated multiplicative network:

% \begin{subequations}
% \begin{align}
% \label{equ:miqcp_example}
% % &\min_{\theta, \mathbf{x}, \mathbf{u}, \{z^b_{i,j}, \delta^b_{i,j}\}} \theta \\
% &\min \quad \theta \\
% \text{s.t.} \quad 
% & Q(\mathbf{x}, \mathbf{u}) \leq \theta \\
% & h(\mathbf{x}) \geq -\theta \\
% & \theta \geq 0\\
% & \hat{z}^b_{i,j} = \mathbf{W}^b_{i,j} z^b_{i-1} + b^b_{i,j}, \quad 
% \\
% & z^b_{i,j} \geq \hat{z}^b_{i,j}, \quad z^b_{i,j} \geq 0 \\
% & z^b_{i,j} \leq \hat{z}^b_{i,j} + m(1 - \delta^b_{i,j}), \quad z^b_{i,j} \leq m \delta^b_{i,j} \\
% & Q(\mathbf{x}, \mathbf{u}) = z^x_{n_x} \cdot z^u_{n_u} \quad \text{(bilinear term)} \\
% &\forall i \in \{1, \ldots, n_b\}, \forall j \in \{1, \ldots, k^{b}_i\}, \forall b \in \{\mathscr{X, U}\} \\
% & \delta^b_{i,j} \in \{0, 1\} 
% \end{align}
% \end{subequations}

\begin{subequations}
\begin{align}
\label{equ:miqcp_example}
\text{find} \quad & (x, u) \tag{20a} \\
\text{s.t.} \quad & (x, u) \in (\mathcal{X} \times \mathcal{U}) \tag{20b} \\
& z^{\mathscr{X}}_0 = x, \quad z^{\mathscr{U}}_0 = (x, u) \tag{20c} \\
& z^{c}_j = W^{c}_j z^{c}_{j-1} + b^{c}_j, \quad j = \{1, \dots, n^{c}\}, \tag{20d} \\
& \text{if } \hat{l}^{c}_{j,k} \geq 0,\quad z^{c}_{j,k} = \hat{z}^{c}_{j,k}, \tag{20e}\\
&\quad j = \{1, \dots, n^{c}-1\}, k = \{1, \dots, d^{c}_j\},\notag\\& \quad\forall c \in \{\mathscr{X, U}\}\tag{20f} \\
& \text{if } \hat{h}^{c}_{j,k} \leq 0,\quad z^{c}_{j,k} = 0, \tag{20g} \\
% & \text{otherwise,} \quad z^{c}_{j,k} \leq \hat{z}^{c}_{j,k}, \tag{20g} \\
% & z^{c}_{j,k} \geq 0, \tag{20h} \\
% & z^{c}_{j,k} \leq \hat{z}^{c}_{j,k} - \hat{l}^{c}_{j,k}(1 - \delta^{c}_{j,k}), \quad \delta^{c}_{j,k} \in \{0,1\} \tag{20i} \\
% & z^{c}_{j,k} \leq \hat{u}^{c}_{j,k} \delta^{c}_{j,k}, \tag{20j} \\
& \text{otherwise,} \notag \\
& \quad \quad z^{c}_{j,k} \leq \hat{z}^{c}_{j,k}, \tag{20h} \\
& \quad \quad z^{c}_{j,k} \geq 0, \tag{20i} \\
& \quad \quad z^{c}_{j,k} \leq \hat{z}^{c}_{j,k} - \hat{l}^{c}_{j,k}(1 - \delta^{c}_{j,k}),  \delta^{c}_{j,k} \in \{0,1\} \tag{20j} \\
& \quad \quad z^{c}_{j,k} \leq \hat{u}^{c}_{j,k} \delta^{c}_{j,k}, \tag{20k} \\
& Q(x, u) = z^{\mathscr{X}}_{n^x} \cdot z^{\mathscr{U}}_{n^u} \quad  \tag{20l}\\
& Q(x, u) \leq 0 , \tag{20m}\\
& h(x) \geq 0 , \tag{20n}
\end{align}
\end{subequations}
% where $\mathcal{X}_{l}, \mathcal{U}_{l}, \mathcal{X}_{h}, \mathcal{U}_{h}$ are the lower and upper bounds of $\mathcal{X, U}$ respectively, and the inequality sign at (20b) is element-wise comparison, Equations (20c)--(22k) encode the nonlinear \textit{ReLU activation} for each node \( z^{c}_{i, j} \), where:
% \begin{itemize}
%     \item \( c \in \{\mathscr{X, U}\} \) denotes the \textbf{branch} (state or control).
%     \item \( j \in \{1, \ldots, n_c\} \) indexes the number of \textbf{layer} at channel $c$.
%     \item \( k \in \{1, \ldots, k^{c}_j\} \) indexes the \textbf{node} at layer $j$ of channel $c$.
% \end{itemize}
where equations (20c)--(22k) encode the nonlinear \textit{ReLU activation} for each node \( z^{c}_{j, k} \), where \( c \in \{\mathscr{X}, \mathscr{U}\} \) denotes the \textbf{branch} (state or control), \( j \in \{1, \ldots, n_c\} \) indexes the number of \textbf{layers} in branch \( c \), and \( k \in \{1, \ldots, k^{c}_j\} \) indexes the \textbf{nodes} at layer \( j \) of branch \( c \).

And the $\hat{l}^{c}_{j,k}$, $\hat{h}^{c}_{j,k}$ denote the lower and upper pre-activation bounds and it can be calculated by any reachability-based method \cite{Liu2019}, and we use linear symbolic bound propagation \cite{zhang2018efficient,hu2024real,hu2024verification}.
The ReLU function is reformulated as a set of linear constraints using an auxiliary binary variable \( \delta^{c}_{j,k} \), that indicates the activation status. And equation (20l) represents the \textit{quadratic constraint} introduced by the multiplication layer that combines the two branches. The verification of forward invariance condition follows the same constraint encoding and replacing $h(x)$ with $Q(f(x, u), \pi_{\phi}(f(x, u)))$. We used Gurobi 11.0.2 optimizer for solving the above MIQCP problem.

\subsection{Verification Pipeline}
We first pretrain the Q-network using the discounted self-consistency condition~\eqref{equ:pretrain_target} with a pretrained policy \( \pi_n \). Then we apply adversarial finetuning by identifying counterexamples that violate the constraint satisfaction or forward invariance condition (Theorem~\ref{theorem:self_con_Q}). To efficiently approximate \( \min_{u'} Q \), we use its sound relaxation \( \min_{IA} Q \) calculated through interval arithmetic \cite{Liu2019}
% ~\eqref{equ:IA}, 
which satisfies \( \min_{IA} Q \leq \min_{u'} Q \). Thus, violations detected under \( \min_{IA} Q \) imply violations of the original condition. Finetuning the pretrained Q-network \( Q_{\text{pre}} \) on these counterexamples yields a refined network \( Q_{\text{ft}} \) better suited for certification, the finetuning loss is defined as follows:
\begin{definition}[Finetune Loss]
\label{def:finetune_loss}
Given a non-empty set of constraint satisfaction counterexamples \(\{(x^{(i)}_{\text{con}}, u^{(i)}_{\text{con}})\}_{i=1}^{n_{\text{con}}}\) with \(n_{\text{con}} > 0\), and a non-empty set of invariance counterexamples \(\{(x^{(i)}_{\text{inv}}, u^{(i)}_{\text{inv}})\}_{i=1}^{n_{\text{inv}}}\) with \(n_{\text{inv}} > 0\), the corresponding finetuning loss functions are defined as:
\begin{align*}
\mathcal{L}_\text{con} &= 
- \sum_{i=1}^{n_{\text{con}}} Q(x^{(i)}_{\text{con}}, u^{(i)}_{\text{con}}), \\
\mathcal{L}_\text{inv} &= 
\sum_{i=1}^{n_{\text{inv}}} \left( -Q(x^{(i)}_{\text{inv}}, u^{(i)}_{\text{inv}}) + \min_{u'} Q(f(x^{(i)}_{\text{inv}}, u^{(i)}_{\text{inv}}), u') \right).
\end{align*}
\end{definition}

Meanwhile, a policy \( \pi_\phi \) is trained to approximate the refined lower bound, enhancing adherence to the self-consistency property. We applied the finetuning pipeline proposed by Yang et al. \cite{Yang2024} by applying their boundary-guided backtracking search to find counterexamples.
Finally, with the obtained $\pi_{\phi}$, we formulate the verification problem into the proposed MIQCP problem \ref{equ:MIQCP} and pass it to the Gurobi solver. If a property is violated, a counterexample will be returned, and the Q network is further finetuned on the counterexample before being passed to be verified again.

\begin{table}[t]
   \centering
   \caption{Safe Set Sizes: Finetuned / Verified}
   \begin{tabular}{|c|c|c|}
       \hline
       Task & Baseline & Proposed \\
       \hline
       double integrator & 0.0583 / 0.0000 & 0.7847 / 0.529 \\
       \hline
       2D double integrator & 0.0380 / 0.0000 & 0.6483 / 0.4720 \\
       \hline
       unicycle & 0.0000 / 0.0000 & 0.7680 / 0.6900 \\
       \hline
       robot arm & 0.0000 / 0.0000 & 0.6640 / 0.4957 \\
       \hline
   \end{tabular}
   \label{tab:merged_safe_set}
\end{table}

\begin{figure}[t]
    \centering
    \begin{adjustbox}{width=0.7\columnwidth}
    \begin{tikzpicture}
        \begin{axis}[
            ymin=0.85,
            ymax=1.0,
            ylabel={Safe Control Set Size (\%)},
            symbolic x coords={Pretrain, Finetune, Verify},
            xtick=data,
            xticklabel style={font=\small},
            legend style={
                at={(0.5,-0.25)}, anchor=north, legend columns=2,
                draw=none, fill=none,
                /tikz/every even column/.append style={column sep=0.5cm}
            },
            legend image code/.code={
                \draw[thick] (0cm,-0.1cm) -- (0.4cm,-0.1cm);
            },
            grid=both,
            major grid style={line width=.2pt,draw=gray!30},
        ]

        % Curve 1 – solid, blue
        \addplot+[thick, mark=*, color=blue!80!black] coordinates {
            (Pretrain, 0.9810)
            (Finetune, 0.9822)
            (Verify, 0.8883)
        };

        % Curve 2 – solid, red
        \addplot+[thick, mark=*, color=red!80!black] coordinates {
            (Pretrain, 0.9981)
            (Finetune, 0.9764)
            (Verify, 0.8876)
        };

        % Curve 3 – solid, green
        \addplot+[thick, mark=*, color=green!60!black] coordinates {
            (Pretrain, 0.9827)
            (Finetune, 0.9600)
            (Verify, 0.9731)
        };

        % Curve 4 – solid, purple
        \addplot+[thick, mark=*, color=purple!70!black] coordinates {
            (Pretrain, 0.9726)
            (Finetune, 0.9619)
            (Verify, 0.9195)
        };

        \legend{
            Double Integrator,
            2D Double Integrator,
            Unicycle,
            Robot Arm
        }
        \end{axis}
    \end{tikzpicture}
    \end{adjustbox}
    \caption{Safe control set size for each task at different stages of pipeline.}
    \label{fig:safe_control_curve}
    \vspace{-5mm}
\end{figure}
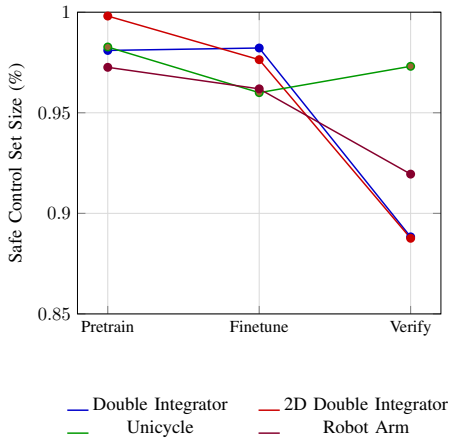

%%%%%%%%%%%%%%%%%%%%%%%%%%%%%%%%%%%%%%%%%%%%%%%%%%%%%%%%%%%%%%%%%%%%%%%%%%%%%%%%
\section{Experiment}

% Our experiments aim to evaluate whether the proposed multiplicative Q-network can be formally verified to satisfy the sufficient conditions of HJ reachability, and whether it maintains a nontrivial safe set across a range of control tasks. We also examine if the verified model correctly avoids hazardous regions that are misclassified by the pretrained or baseline networks.
Our experiments aim to answer the following questions:\\
\textbf{Q1}: Can the proposed multiplicative Q-network be formally verified to satisfy the sufficient conditions derived from HJ reachability?\\
\textbf{Q2}: Does the verified control dependent safe set maintain a nontrivial safe set across diverse control tasks?

\subsection{Implementation Details}
We evaluated our verification framework on four safe control tasks: double integrator, 2D double integrator, unicycle, and robot arm, covering both linear and nonlinear dynamics and constraints. The unicycle system has nonlinear dynamics, while the others are linear; constraints are nonlinear except for the double integrator. The input dimensions range from 3 to 6.
The proposed multiplicative Q-network uses separate \(X\)- and \(U\)-branches, each with two hidden layers of 32 neurons, and an 8-dimensional output embedding. The baseline is a fully connected network with two hidden layers of 32 neurons. The proposed model has 2,560 learnable parameters, compared to 6,144 in the baseline.
The runtime experiments were conducted on Ubuntu 22.04 with Intel Xeon Silver $4214$ CPUs and
% a machine with dual-socket Intel Xeon Silver $4214$ CPUs (48 threads, $2.2$GHz base frequency) and eight 
NVIDIA RTX A4000 GPUs (each with $16$ GB VRAM) based on the verification toolbox ~\cite{wei2024modelverification}.
% , using CUDA 11.8 and NVIDIA driver version 520.56.06.

\subsection{Results}
Our multiplicative Q-network successfully satisfies the %necessary and 
sufficient conditions of HJ reachability while maintaining a reasonable safe set size.

As shown in \Cref{tab:merged_safe_set}, the proposed network avoids the safe set collapse issue observed in the baseline. After finetuning, the baseline network often collapses to an empty safe set—particularly for the unicycle and robot arm systems—whereas our method preserves safe sets with sizes 0.7603 and 0.6313, respectively. Furthermore, while the baseline network cannot be verified without collapsing, the proposed network consistently maintains a verifiable safe set across all tasks.

\begin{figure*}[!t]
    \centering
    \begin{subfigure}[t]{0.24\textwidth}
        \includegraphics[width=\textwidth]{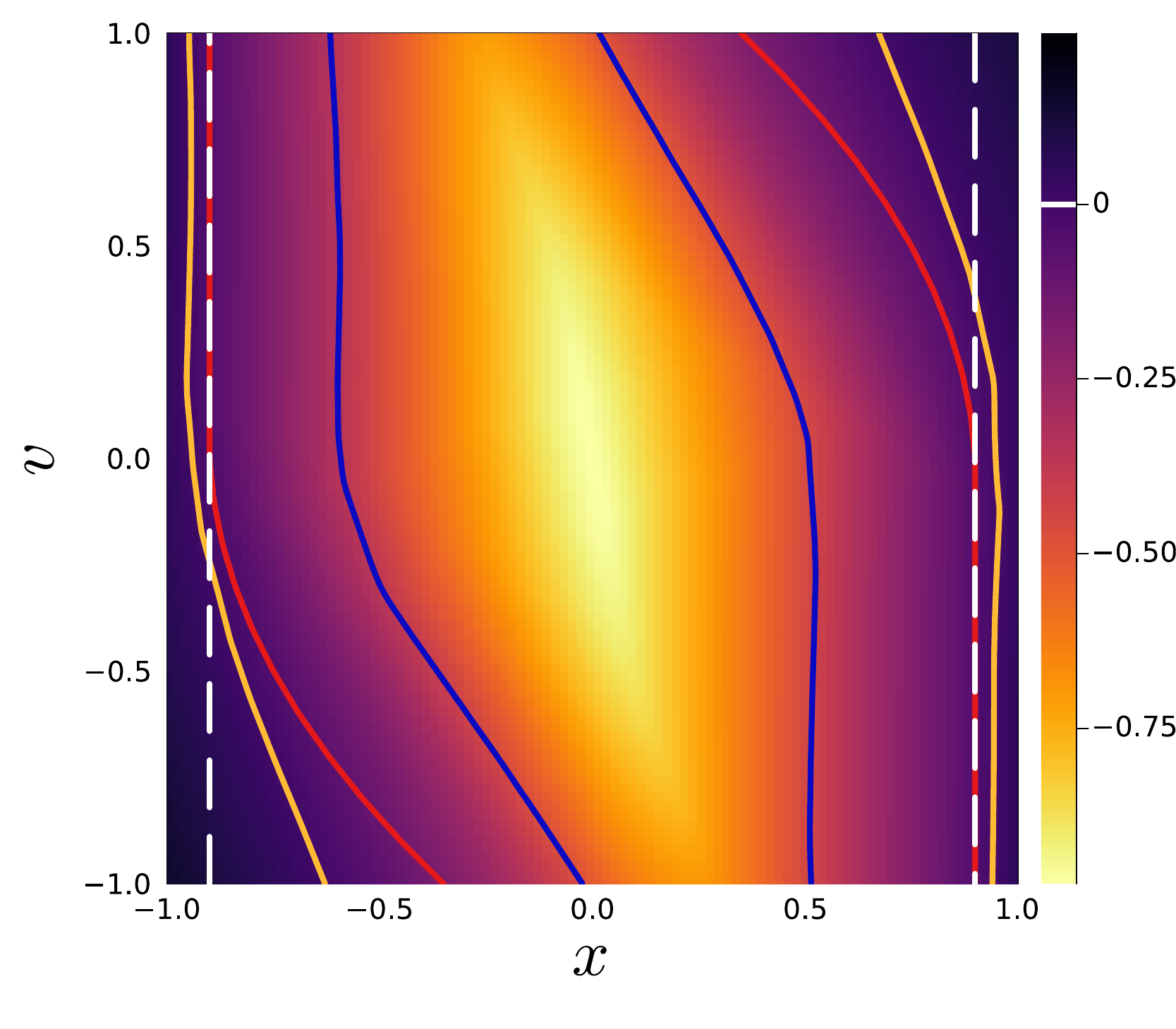}
        \caption{}
        \label{fig:double_integrator_safe_set}
    \end{subfigure}
    \begin{subfigure}[t]{0.24\textwidth}
        \includegraphics[width=\textwidth]{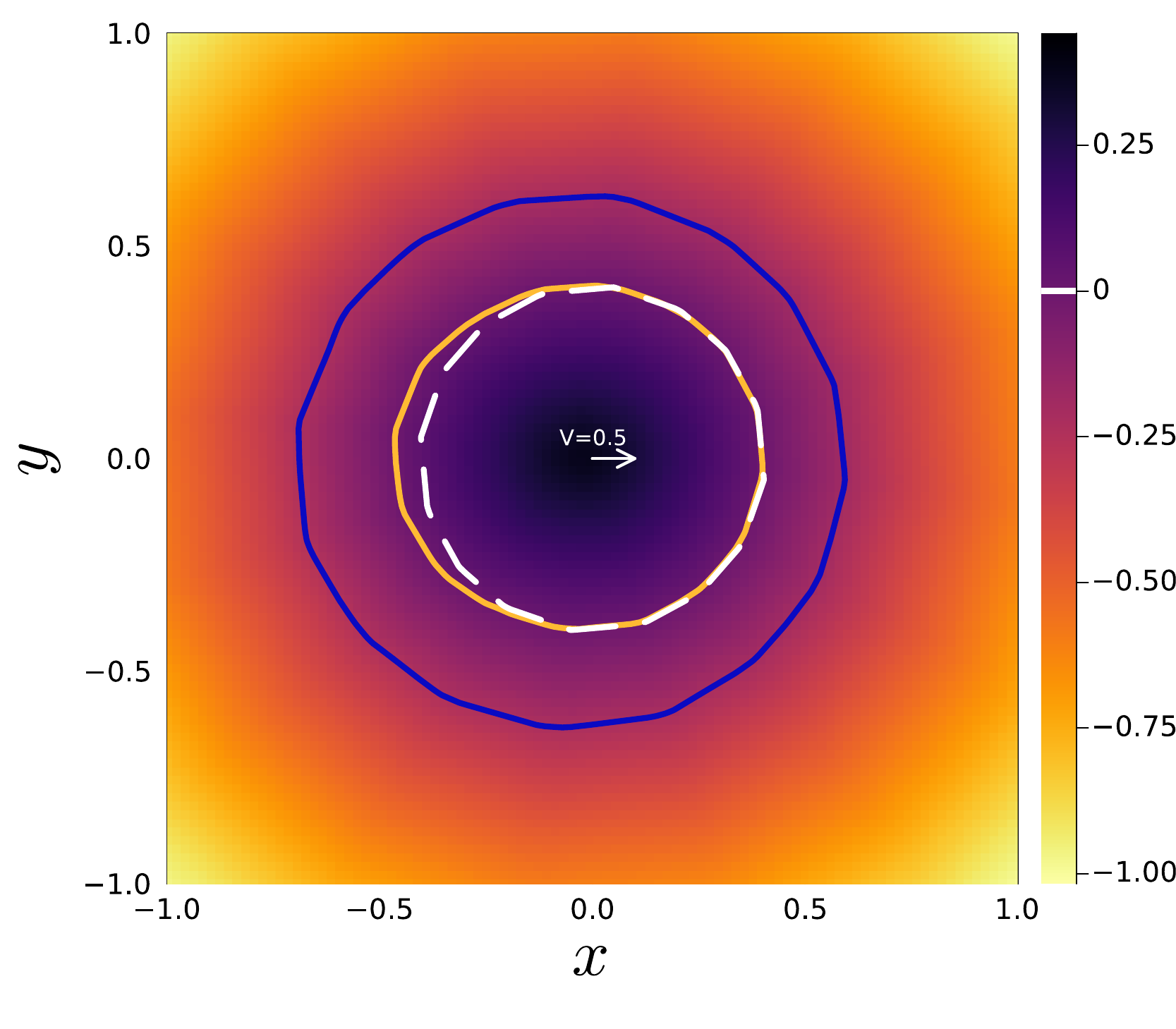}
        \caption{}
        \label{fig:2D_di}
    \end{subfigure}
    \begin{subfigure}[t]{0.24\textwidth}
        \includegraphics[width=\textwidth]{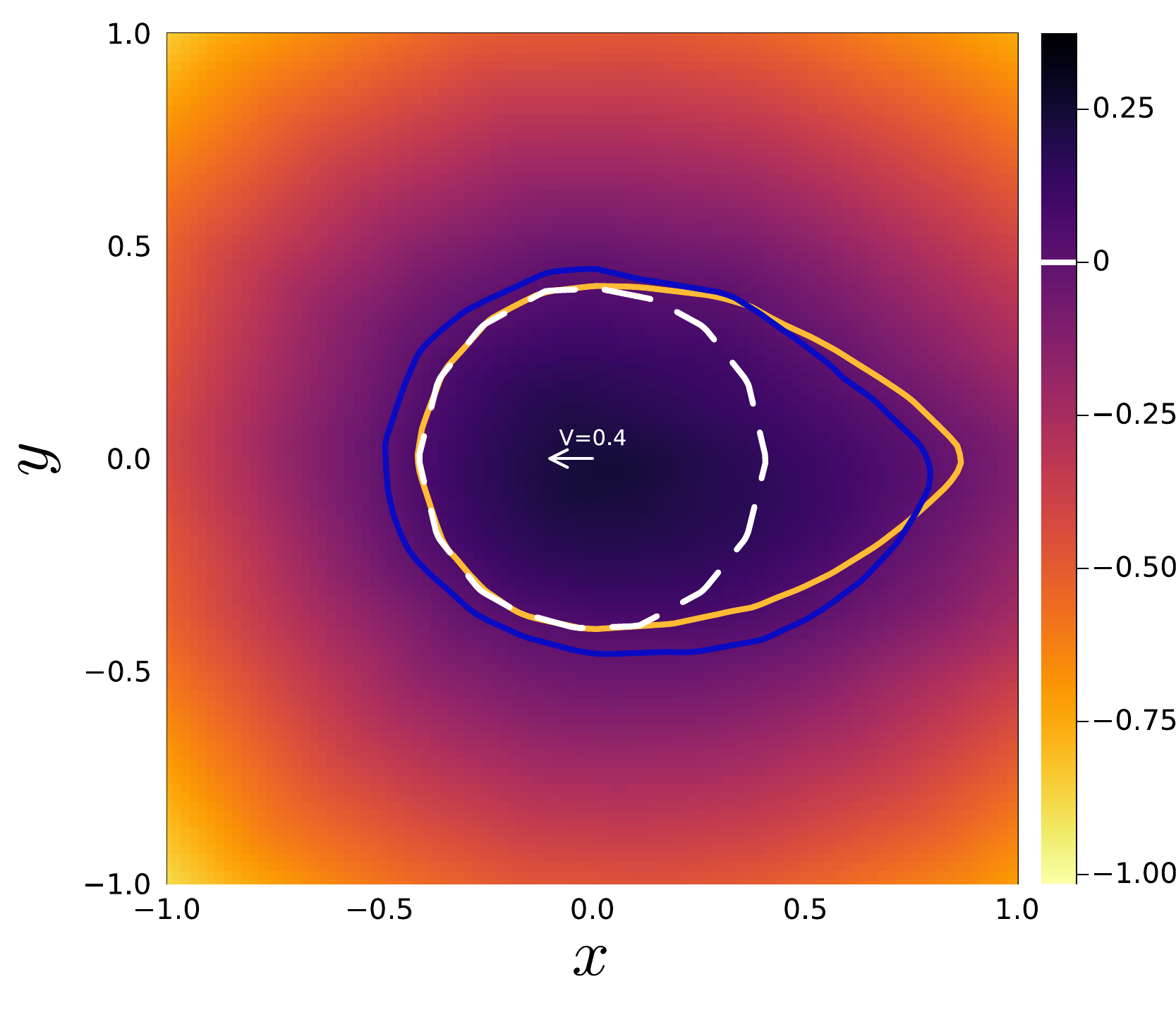}
        \caption{}
        \label{fig:unicycle}
    \end{subfigure}
    \begin{subfigure}[t]{0.24\textwidth}
        \includegraphics[width=\textwidth]{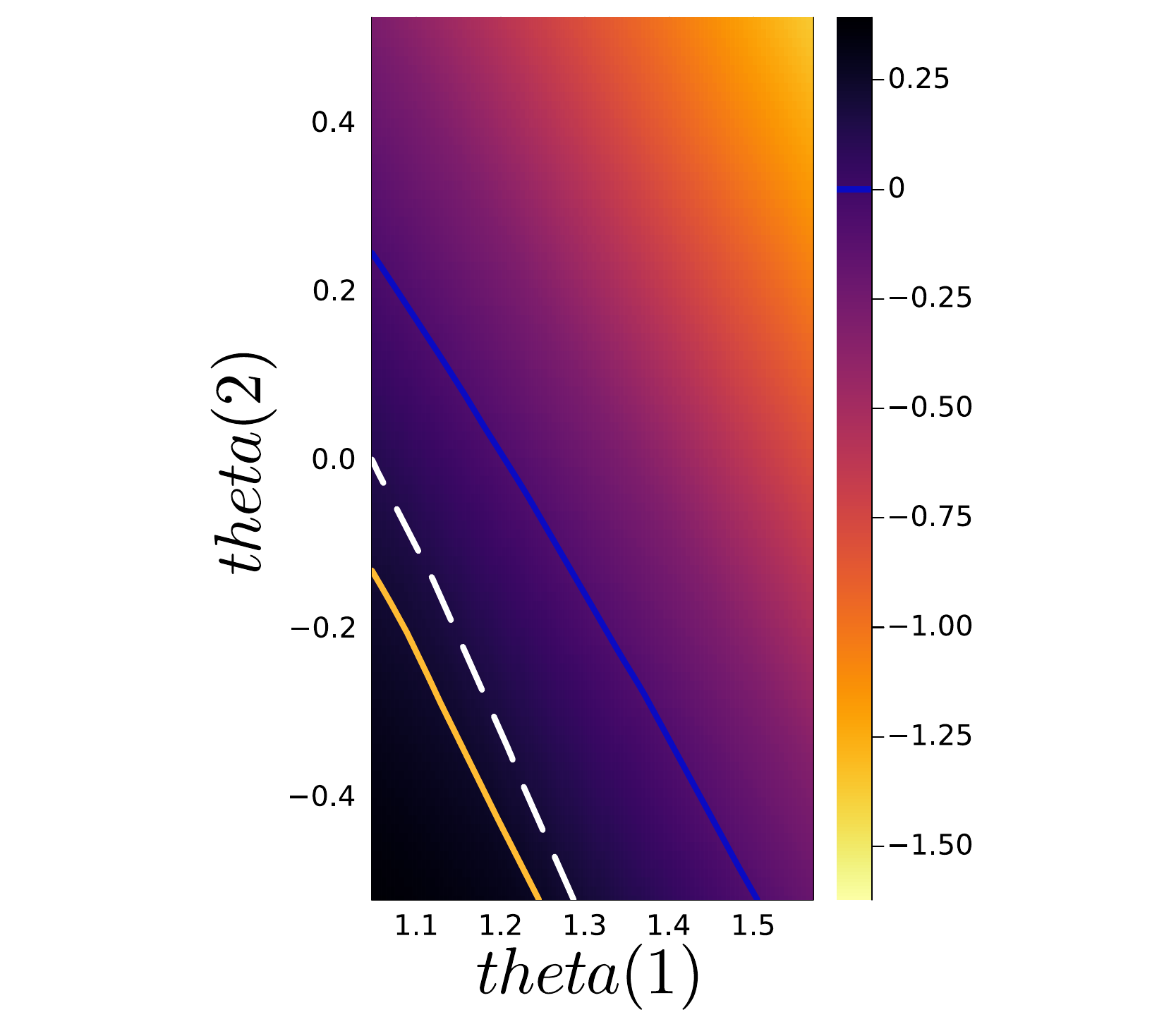}
        \caption{}
        \label{fig:robot_arm}
    \end{subfigure}
    
    \vspace{0.5em}
    
    \includegraphics[width=0.6\textwidth]{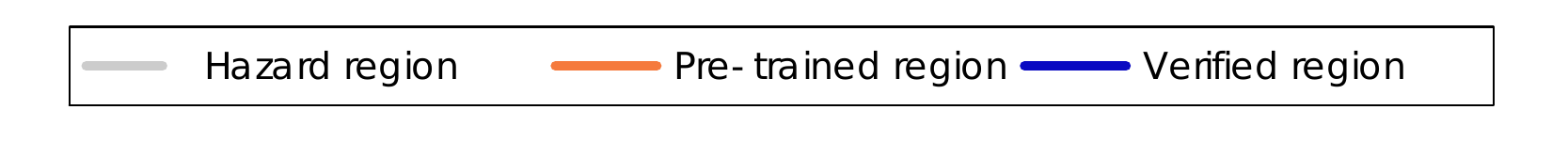}
    
    \caption{Verified safe sets for (a) double integrator, we also demonstrate the ground truth safe set of double integrator with the red contour, (b) 2D double integrator with hazard regions of radius $r=0.4$, (c) unicycle with fixed speed of $0.9$ (direction indicated by arrow), and (d) robot arm with fixed speed.}
    \label{fig:all_safe_sets}
    \vspace{-3mm}
\end{figure*}
In \Cref{fig:double_integrator_safe_set}, we show the verified safe invariant set with the ground truth one in red contour derived from explicitly solving the HJB variational inequality for the double integrator system, which has linear dynamics and constraints. The verified safe invariant set strictly satisfies the constraint satisfaction and forward invariance conditions. However, it still has a certain degree of zero-sublevel-set shrinkage problem compared with the ground truth.

% \begin{figure}[!t]
%     \centering
%     \includegraphics[width=0.8\columnwidth]{images/update_pdf/double_intergrator_ground_truth.pdf}
%     \caption{Double Integrator Ground Truth Safe Set}
%     \label{fig:ground_truth}
% \end{figure}
% \Cref{fig:double_integrator_safe_set} shows the verified safe invariant set for the double integrator system, compared against the ground truth derived by explicitly solving the HJB variational inequality. While the verified set strictly satisfies constraint satisfaction and forward invariance, it exhibits some zero-sublevel-set shrinkage compared to the ground truth.

% \Cref{fig:pipeline_speed} illustrates the time cost across different pipeline stages. Overall, verification is the most time-consuming step, except for the 2D double integrator where finetuning takes longer (13.16 minutes). The unicycle system, with nonlinear dynamics and constraints, requires the longest pretraining (9.009 minutes) and verification (16.663 minutes). As summarized in \Cref{tab:speed}, replacing Mixed-Integer Quadratic Constrained Programming (MIQCP) with MILP yields significant speedup, exceeding 750× for the 2D double integrator and solving most MILPs within one minute.

In \Cref{fig:double_integrator_safe_set}, we also highlight the hazardous region \(|d| \geq 0.9\) marked by dashed lines. The pretrained Q-network (orange contour) violates constraint satisfaction by misclassifying part of the hazard as safe, while the verified Q-network (blue contour) corrects this error through optimal control via the trained policy \(\pi_{\phi}\).

\Cref{fig:2D_di,fig:unicycle} display safe sets for the 2D double integrator and unicycle systems in Cartesian space. The heading is fixed at \(0^\circ\) and \(180^\circ\), with speeds of 0.5 m/s and 0.4 m/s. In both cases, the pretrained Q-function incorrectly identifies regions near the center hazard (white dashed circle, radius 0.4) as safe, while the verified Q-function avoids such violations.

For the robot arm system, where the state is \([\theta_1, \theta_2, \dot{\theta}_1, \dot{\theta}_2]\) and control is \([\omega_1, \omega_2]\), the safe set is defined by a distance constraint relative to the base. As shown in \Cref{fig:robot_arm}, the pretrained Q-function overestimates the safe set by including hazardous joint configurations (black boundary), whereas the verified Q-function strictly excludes unsafe regions.

Finally, \Cref{fig:safe_control_curve} shows the evolution of the average safe control set size through the pipeline. The safe set becomes more conservative as counterexamples are incorporated, reflecting improved robustness.

 %%%%%%%%%%%%%%%%%%%%%%%%%%%%%%%%%%%%%%%%%%%%%%%%%%%%%%%%%%%%%%%%%%%%%%%%%%%%%%%%
\section{Conclusions}
In this paper, we introduced a verifiable, model-free safety filter grounded in Hamilton-Jacobi reachability. Our proposed multiplicative Q-network enables certification of constraint satisfaction and forward invariance for control-dependent safe sets, while alleviating zero-sublevel-set shrinkage common in prior methods. Experiments validate its effectiveness across both linear and nonlinear systems.
Future work will focus on extending the framework to high-dimensional, continuous control tasks, and on refining safety guarantees by analyzing the conservativeness introduced by interval arithmetic.

% In this paper, we proposed a verifiable, model-free safety filter based on Hamilton-Jacobi reachability analysis. Through experimental validation, we demonstrated that the proposed multiplicative Q-network structure can be successfully verified for the expanded constraint satisfaction and forward invariance properties, specifically for the control-dependent safe set. Additionally, our approach effectively mitigates the issue of severe safe set shrinkage, which is a common limitation in existing methods. The results underscore the potential of our approach in enhancing the verifiability of model-free safety filters, particularly in the context of both linear and nonlinear dynamics and constraints.
% For future work, we plan to extend this framework to high-dimensional and continuous control tasks, where scalability and computational efficiency present more significant challenges. Moreover, we aim to conduct a thorough investigation into the impact of the loose bounds obtained through interval arithmetic, with the goal of further refining the safety guarantees of the system.
%%%%%%%%%%%%%%%%%%%%%%%%%%%%%%%%%%%%%%%%%%%%%%%%%%%%%%%%%%%%%%%%%%%%%%%%%%%%%%%%
% \section{ACKNOWLEDGMENTS}

% The authors gratefully acknowledge the contribution of National Research Organization and reviewers' comments.

%%%%%%%%%%%%%%%%%%%%%%%%%%%%%%%%%%%%%%%%%%%%%%%%%%%%%%%%%%%%%%%%%%%%%%%%%%%%%%%%

% References are important to the reader; therefore, each citation must be complete and correct. If at all possible, references should be commonly available publications\cite{Liu2019, wei2024modelverification}.

\bibliographystyle{plain}
\bibliography{./references}
\end{document}